\theoremstyle{plain}
\newtheorem{theorem}{Theorem}[section]
\newtheorem{corollary}[theorem]{Corollary}
\theoremstyle{definition}
\newtheorem{definition}[theorem]{Definition}
\theoremstyle{remark}
\newtheorem{remark}[theorem]{Remark}
\def\be{\mathbf{e}}
\def\smskip{\smallskip}
\def\texitem#1{\par\smskip\noindent\hangindent 25pt
               \hbox to 25pt {\hss #1 ~}\ignorespaces}
\def\norm#1{\|#1\|}
\newcommand{\BEAS}{\begin{eqnarray*}}
\newcommand{\EEAS}{\end{eqnarray*}}
\newcommand{\BEA}{\begin{eqnarray}}
\newcommand{\EEA}{\end{eqnarray}}
\newcommand{\BEQ}{\begin{eqnarray}}
\newcommand{\EEQ}{\end{eqnarray}}
\newcommand{\BIT}{\begin{itemize}}
\newcommand{\EIT}{\end{itemize}}
\newcommand{\BNUM}{\begin{enumerate}}
\newcommand{\ENUM}{\end{enumerate}}
\newcommand{\BA}{\begin{array}}
\newcommand{\EA}{\end{array}}
\newif\ifpagenumbering
\newsavebox{\theorembox}
\newsavebox{\lemmabox}
\newsavebox{\defnbox}
\newsavebox{\assbox}
\savebox{\theorembox}{\noindent\bf Theorem}
\savebox{\lemmabox}{\noindent\bf Lemma}
\savebox{\defnbox}{\noindent\bf Definition}
\crefname{assumption}{Assumption}{Assumptions}
\crefname{theorem}{Theorem}{Theorems}
\crefname{lemma}{Lemma}{Lemmas}
\def\xqs#1{\textcolor{black}{#1}}
\def \yh #1{#1}
\def \yhrmv #1{}
\icmltitlerunning{Unveiling Markov Heads in Pretrained Language Models for Offline
Reinforcement Learning}
\begin{document}

\twocolumn[
\icmltitle{\xqs{Unveiling Markov Heads in Pretrained Language Models for Offline Reinforcement Learning}}



\icmlsetsymbol{equal}{*}
\icmlsetsymbol{intern}{$\dagger$}

\begin{icmlauthorlist}
\icmlauthor{Wenhao Zhao}{renmin,equal,intern}
\icmlauthor{Qiushui Xu}{usa,equal,intern}
\icmlauthor{Linjie Xu}{uk,intern}
\icmlauthor{Lei Song}{microsoft}
\icmlauthor{Jinyu Wang}{microsoft}
\icmlauthor{Chunlai Zhou}{renmin}
\icmlauthor{Jiang Bian}{microsoft}
\end{icmlauthorlist}

\icmlaffiliation{renmin}{Computer Science Department, Renmin University of China, Beijing, China}
\icmlaffiliation{usa}{Department of Industrial and Manufacturing Engineering, Pennsylvania State University, University Park, PA,USA}
\icmlaffiliation{uk}{School of Electrical Engineering and Computer Scientce, Queen Mary University of London, London, UK}
\icmlaffiliation{microsoft}{Microsoft Research Asia, Beijing, China}

\icmlcorrespondingauthor{Lei Song}{lei.song@microsoft.com}


\vskip 0.3in
]



\printAffiliationsAndNotice{\icmlEqualContribution \icmlIntern} 

\begin{abstract}
\xqs{Recently, incorporating knowledge from pretrained language models (PLMs) into decision transformers (DTs) has generated significant attention in offline reinforcement learning (RL). These PLMs perform well in RL tasks, raising an intriguing question: what kind of knowledge from PLMs has been transferred to RL to achieve such good results? This work first dives into this problem by analyzing each head quantitatively and points out \emph{Markov head}, a crucial component that exists in the attention heads of PLMs. It leads to extreme attention on the last-input token and performs well only in short-term environments. Furthermore, we prove that this extreme attention cannot be changed by re-training embedding layer or fine-tuning. Inspired by our analysis, we propose a general method \texttt{GPT2-DTMA}, which equips a pretrained DT with Mixture of Attention (MoA), to accommodate diverse attention requirements during fine-tuning. Extensive experiments corroborate our theorems and demonstrate the effectiveness of \texttt{GPT2-DTMA}: it achieves comparable performance in short-term environments while significantly narrowing the performance gap in long-term environments.} 
\end{abstract}

\section{Introduction}
\begin{figure*}[!t]
    \centering
    \includegraphics[width=0.95\textwidth]{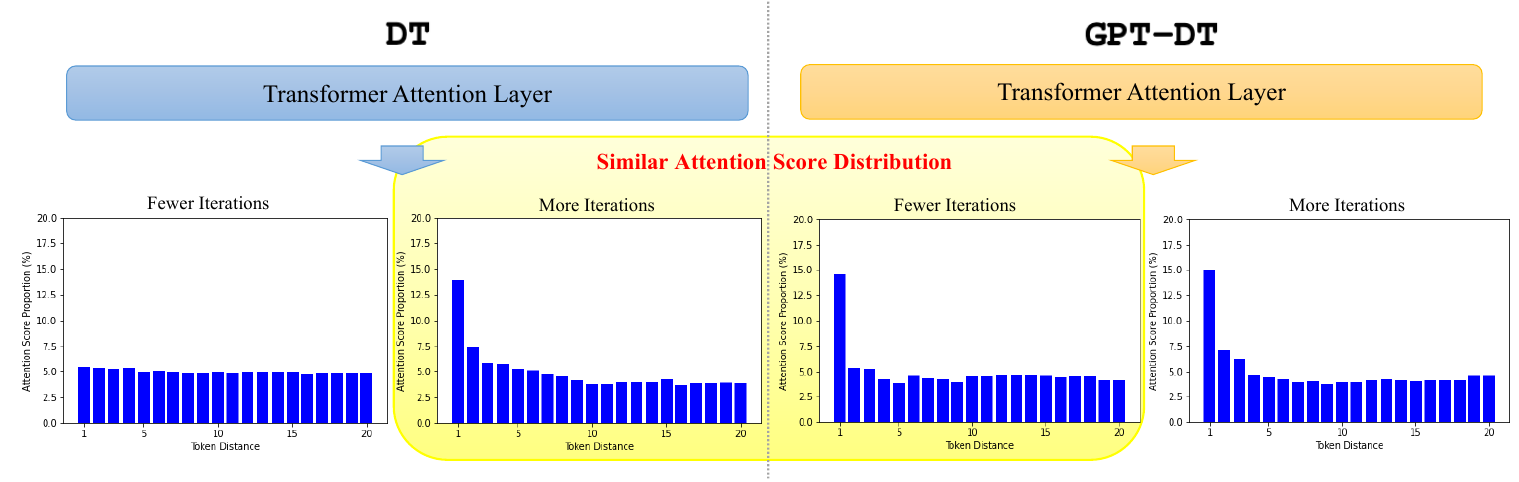}
    \caption{We compare the attention score distribution of \texttt{DT} and \texttt{GPT2-DT} during the process of fine-tuning within fewer iterations and more iterations under Hopper-medium environment. More comparisons of the attention score distribution on Walker2d-medium and Halfcheetah-medium can be found in \cref{appendix:dist_cmp}.}
    \label{intro}
\end{figure*}

Transformers \cite{vaswani2017attention} achieve significant improvements in natural language processing \cite{devlin2018bert}, computer vision \cite{yuan2021tokens} and AI4Science \cite{wang2024finegrained} tasks, for it encodes the input data into powerful features via the attention mechanism. 
\xqs{Applying transformers to the field of reinforcement learning (RL)}, Decision Transformer (DT) \cite{chen2021decision} which models the offline RL problem \cite{levine2020offline} to a return-conditional sequence-to-sequence problem, has demonstrated superior performance in solving different types of RL problems \cite{correia2023hierarchical, mezghani2023think, badrinath2024waypoint, janner2021offline}.

Recently, using \xqs{pretrained language models (PLMs)} and fine-tuning them for specific tasks \cite{kenton2019bert, raffel2020exploring} has gradually replaced training Transformers from scratch. Pretraining phase enables PLMs to gain rich universal language representations transferable to various downstream tasks \cite{xiong-etal-2024-large}. Remarkably, \citet{noorbakhsh2021pretrained, goel2022pre} showed that PLMs can effectively tackle tasks beyond NLP. Inspired by the powerful knowledge transfer capabilities of PLMs, \citet{reid2022can, shi2024unleashing} attempt to transfer NLP knowledge to RL domain, \xqs{indicating that PLMs can enhance the performance of downstream offline RL tasks as well. We collectively refer to the DT variant models, which are initialized with PLMs, as \texttt{GPT2-DT}.} 

\xqs{To understand this surprising efficacy when transferring knowledge from PLMs to offline RL, \citet{shi2024unleashing} concluded that sequential modeling ability is the key to unleashing the potential of PLMs for RL tasks. Furthermore, \citet{takagi2022effect} proposes a hypothesis that pretraining with text is likely to make PLMs get context-like information and utilize it to solve the downstream task. However, it is still unclear that why these key features are essential for RL tasks and how to acquire these knowledge through pretraining. 
To bring clarity to PLMs in offline RL and provide more insightful guidance on how to better train and adapt transformer-based models for RL tasks, we dive into the mechanism of PLMs and do more quantitative analysis on the direct transferred parameters to RL. Based on our quantitative analysis, we first propose the definition of \emph{Markov heads}, which inherit from PLMs and are fundamental for RL tasks and then adapt PLMs to all kinds of RL tasks by flexibly utilizing the properties of \emph{Markov heads}.}

\xqs{In this work, we investigate both short-term and long-term planning ability \cite{shang2022starformer} of PLMs. To be more specific, we define \emph{short-term} environments as where the actions prediction recalls a small amount of contextual information from most recent timesteps (e.g. MuJoCo Locomotion), and \emph{long-term} environments that require the agent to extract useful information from the whole previous context (e.g. PointMaze), as context from far-off timesteps is helpful to predict reasonable actions under this scenario. The reason why we want to consider both short-term and long-term is that, our investigation found while PLMs perform better in short-term environments, it is less effective in long-term environments compared to a DT trained from scratch. Therefore, we are interested in what kind of NLP knowledge from PLMs has been successfully transferred to short-term RL tasks but failed in long-term RL task. }

\xqs{
We discover that, as shown in \cref{intro}, \texttt{GPT2-DT} \cite{reid2022can, shi2024unleashing, yang2024pre} has shared similar attention score distribution with well-trained DT, even at the initial stage of its training.
We define the importance score of each head by zero-out each head and measuring the difference of the predictions. We also demonstrated the heatmap of each head and found that all attention heads can be divided into two categories. In particular, we define \emph{Markov heads} as one kind of special heads inheriting from NLP tasks and identify that these heads lead DTs to focus more on the most recent state in short-term RL tasks by our theoretical analysis. Furthermore, we examine that \emph{Markov heads} will be retained under any random embedding initializations and cannot be easily switched to non-Markov heads by fine-tuning both theoretically and practically. }

Naturally, we are also curious if we can still adapt PLMs to all kinds of RL tasks without training from scratch even though \emph{Markov heads} will not be changed by fine-tuning. To bridge the performance gap in \emph{long-term} environments, we propose \texttt{GPT2-DTMA}, combining PLMs with Mixture of Attention (MoA). Here, we consider each attention head as an expert and learn the weights of each expert adaptively according to each downstream RL task, so that \texttt{GPT2-DTMA} can automatically identify the required planning abilities by adjusting the weights 
of \emph{Markov heads} when predicting the next action. \texttt{GPT2-DTMA} achieves superior performance in short-term environments and performs comparably to DTs in long-term environments.

In summary, our contributions are as follows:
\begin{itemize}
    \item We first propose the definition of \emph{Markov heads} and further explore the properties of \emph{Markov heads}. We prove that \emph{Markov heads} will be preserved under any embedding initializations and fail to modify by fine-tuning. We also validate our theorems through extensive comparative experiments. 
    \item We do a comprehensive quantitative analysis on the heads of PLMs by using different ways to explore the deeper mechanism of PLMs based on our definitions and theoretical analysis.
    We show that \emph{Markov heads} are the key information transferred from NLP.
    \item Finally, we propose a general model \texttt{GPT2-DTMA} that can learn the weights of each head automatically and it significantly narrowed the performance gap of our model between short-term and long-term RL tasks compared to baselines. 
\end{itemize}
\section{Related work}
\subsection{\xqs{Decision Transformer}}

\xqs{Transformer-based architecture \cite{vaswani2017attention} is widely explored in offline RL. Decision Transformer (DT) \cite{chen2021decision} firstly models RL as an autoregressive generation problem. By conditioning on previous trajectories and returns-to-go, DT can generate desired actions without explicit reward modeling or dynamic programming. Furthermore, \citet{furuta2021generalized} explores other kinds of hindsight information instead of returns-to-go that can benefit sequential decision-making. Q-learning DT \cite{yamagata2023q} proposes to combine DT with dynamic programming by utilizing a conservative value function to relabel returns-to-go in the training dataset. In addition to modifying the context by augmented information \cite{wang2024finegrained}, some other work is also attempting to modify the model architecture of DTs. \citet{shang2022starformer} argues that the DT structure, which requires all tokens as input, is inefficient for learning Markovian-like dependencies. \citet{kim2023decision} proposes replacing the attention layers with convolutional layers to better capture the inherent local dependent patterns in trajectories of RL tasks. However, all existing work focuses mainly on data augmentation for whole offline datasets and model redesign. These methods are still struggling with low computation-efficiency and unstable performance on all RL tasks. 
In our work, we aim at without modifying the model architecture or introducing new hidden information, and prove that PLMs can effectively capture local information thanks to \emph{Markov heads}. }

\subsection{Pretrained Methods for Offline RL}

\yh{A variety of approaches have been proposed to leverage pretraining technique in offline reinforcement learning.}
\citet{nair2006accelerating,zheng2022online,kostrikov2021offline} follow the standard pretrain-finetune paradigm, pretraining on a single-task offline dataset followed by fine-tuning in an online environment. 
\citet{xie2023future} pretrain decision transformers (DTs) using reward-free trajectories, and then fine-tune the model using reward-tagged trajectories collected online. Additionally, pretraining on mixed data collected from various RL tasks \cite{liu2022masked, xu2022prompting, sun2022smart, carroll2022uni, wu2023masked} can achieve good performance on downstream tasks through fine-tuning or few-shot learning.

\xqs{However, the scale of pretraining datasets with trajectory structures in RL domain is not comparable to the scale of text-structured datasets in NLP domain, which limits DTs' ability to thoroughly explore the nature of RL tasks. To address this, a pretraining paradigm on language datasets has been proposed; more details can be found in \cref{apx:cross_domain}. \citet{reid2022can, shi2024unleashing} suggest that initializing DTs with pretrained parameters from PLMs, such as those from GPT-2, can overcome this limitation and outperform DTs that are initialized randomly. \citet{yang2024pre, zheng2024decomposed} also utilize text content as pretraining samples, but during the fine-tuning phase, they incorporate the concept of Prompt-DT \cite{xu2022prompting} by using trajectories from the single-task or multi-tasks as prior input information to assist DTs in action prediction.}

\section{Preliminaries}
\subsection{Offline Reinforcement Learning}

In offline RL, the policy model $\pi$ is trained on a pre-collected dataset, rather than interacting with the environment in real time. We consider learning in a Markov decision process (MDP) described by $\mathcal{M}= \langle \rho_0, \mathcal{S}, \mathcal{A}, P, \mathcal{R}, \gamma\rangle$, where $\rho_0$ is the initial state distribution, $\mathcal{S}$ is the state space, $\mathcal{A}$ is the action space, $P(s'|s,a)$ is the transition probability, $\mathcal{R}(s,a)$ is the reward function and $\gamma\in(0,1]$ is the discount factor. We use $s_t,a_t$ and $r_t=\mathcal{R}(s_t,a_t)$ to denote the state, action and reward at timestep $t$, respectively. The goal of offline RL is to find an optimal policy $\pi^*$ that maximizes the $\gamma$-discounted expected return:
\begin{align}
    \max_{\pi} E_{s_{0:T},a_{0:T} \sim \rho_0,\pi,P}\Big[\sum_{t=0}^{T}\gamma^{t}R(s_t,a_t)\Big].
\end{align}
\subsection{Decision Transformer}

DTs \cite{chen2021decision} are designed to generate next actions relied on the information about expected future rewards for the current trajectories, therefore they propose returns-to-go (RTG) denoted as $\hat{R}_t \triangleq \sum_{i=t}^{T} r_i$, which represents the sum of expected future rewards from the current timestep $t$. In order to incorporate RL problems into a sequential model, DTs transform the trajectory into the format of $\tau=(\hat{R}_{0}, s_{0}, a_{0}, \hat{R}_{1}, s_{1}, a_{1}, \ldots, \hat{R}_{T}, s_{T}, a_{T})$, where $\hat{R}_t, s_{t}, a_{t}$ are the RTG, state and action at timestep $t$, respectively. \yh{The next action predicted by policy $\pi_\theta$ is conditioned on the previous trajectory $\tau$ up to the timestep $t-1$, current RTG $\hat{R}_t$ and current state $s_t$:}
\begin{align}
    a'_t = \pi_{\theta}(\hat{R}_0, s_0, a_0, ... a_{t-1}, \hat{R}_t, s_t),
\end{align}
\noindent and \yh{$\pi_{\theta}$ is trained by minimizing the squared error between the true action $a_t$ and the predicted next action $a'_t$:}
\begin{equation}\label{Eq: dt loss}
    \mathcal L_{DT}=\sum_{t=0}^{T}\Vert a_t-a'_t\Vert_2^2\,.
\end{equation}

\section{Methodology}
In this section, we will first give the definition of \emph{Markov heads} and analyze the properties of \emph{Markov heads}. Based on our theoretical analysis, we then introduce a general approach \texttt{GPT2-DTMA}, that shows great adaptive abilities to either long-term or short-term environments.

\subsection{Markov Head}
\begin{definition}\label{def:mark-head}
For any matrix $A\in\mathbb{R}^{d\times d}$, $A$ is a \emph{Markov matrix} if and only if $A$ satisfies the following conditions:
    \begin{enumerate}
        \item[(i).] {The diagonal elements $\{A_{i,i}\}_{i=1}^d$ are all positive.}
        \item[(ii).] {The ratio between the mean of absolute diagonal elements and the mean of absolute off-diagonal elements are strictly greater than some large positive number $r$, i.e., $\frac{\overline{|A_{ii}|}}{\overline{|A_{ij}|}} >r$, where $\overline{|A_{ii}|}=\frac{\sum_{i=1}^d |A_{ii}|}{d}$ and $\overline{|A_{ij}|}=\frac{\sum_{i,j=1,\cdots,d,i\neq j} |A_{ij}|}{d^2-d}$.}
     \end{enumerate}
\end{definition}

\xqs{For each head, we have three weight matrices corresponding to key, query and value. According to the definition of attention scores, we will focus our analysis on $W^q(W^k)^T $, which is the matrix multiplication between key weight matrix and query weight matrix. Then, we introduce the definition of \emph{Markov head} based on \cref{def:mark-head}.}

\begin{definition}\label{def:mark-head-2}
For head $i$, if the weight matrix $W_i^q(W_i^k)^T $ is a \emph{Markov matrix}, then we say head $i$ is a \emph{Markov head}.
\end{definition}

\subsubsection{Theoretical Analysis}
\emph{Markov matrix} has some important properties that can give us more insight about what knowledge in PLMs is beneficial for RL tasks or limit the performance of generalization ability in RL tasks. In this section, we provide a theoretical analysis and a detailed explanation of the properties of \emph{Markov matrix} and \emph{Markov heads}.

\begin{theorem}\label{thm:mc-main}
For any random embedding vector $\be_i\in\mathbb{R}^{1\times d},i = 1,\cdots,K$, the elements of each $\be_i$ are i.i.d. sampled from a normal distribution. For any given Markov matrix $A\in\mathbb{R}^{d\times d}$ satisfying \cref{def:mark-head}, then $\mathbb{E}[EAE^T]$ is also a Markov matrix, where $E =  (\be_1, \cdots, \be_K)^T
\in \mathbb{R}^{K\times d}$.
\end{theorem}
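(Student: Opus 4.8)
The plan is to compute the entries of $\mathbb{E}[EAE^T]$ directly in terms of the entries of $A$, exploiting that the rows $\be_i$ are independent and that the components of each $\be_i$ are i.i.d.\ with mean zero (say $\be_i\sim\mathcal{N}(0,\sigma^2 I_d)$), and then check that the resulting $K\times K$ matrix satisfies R.1 and R.2 of Definition~\ref{def:mark-head}. First I would write, for indices $m,n\in\{1,\dots,K\}$, $(EAE^T)_{mn} = \be_m A \be_n^T = \sum_{k=1}^{d}\sum_{l=1}^{d} A_{kl}\,(\be_m)_k (\be_n)_l$, and take expectations term by term. The key elementary fact is that $\mathbb{E}[(\be_m)_k(\be_n)_l]$ is $\sigma^2$ when $m=n$ and $k=l$, and $0$ otherwise (by independence across rows and across coordinates, plus zero mean). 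Hence all off-diagonal blocks vanish, $\mathbb{E}[(EAE^T)_{mn}] = 0$ for $m\neq n$, and the diagonal entries are $\mathbb{E}[(EAE^T)_{mm}] = \sigma^2 \sum_{k=1}^d A_{kk} = \sigma^2\,d\,\overline{|A_{ii}|}$ up to the sign issue, which is positive since R.1 for $A$ gives $A_{kk}>0$ for all $k$, so $\sum_k A_{kk}>0$.

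Next I would verify the two defining conditions for $M := \mathbb{E}[EAE^T]\in\mathbb{R}^{K\times K}$. Condition R.1: each diagonal entry equals $\sigma^2\sum_k A_{kk}$, which is strictly positive by the computation above, so all diagonal entries of $M$ are positive. Condition R.2: the off-diagonal entries of $M$ are all exactly $0$, so $\overline{|M_{ij}|}=0$ while $\overline{|M_{ii}|}=\sigma^2\sum_k A_{kk}>0$; therefore the ratio $\overline{|M_{ii}|}/\overline{|M_{ij}|}$ is $+\infty$, which is trivially greater than any finite threshold $r$. (If one prefers to avoid dividing by zero, one can note the statement is meant in the limiting sense, or restrict to the realistic finite-sample version where the empirical off-diagonal mean is small; I would add a one-line remark to this effect.) This establishes that $M$ is a Markov Matrix.

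I do not anticipate a genuine obstacle here — the result is essentially a second-moment computation — but two points deserve care in the writeup. The first is stating the distributional hypothesis precisely: the theorem says "i.i.d.\ sampled from normal distribution," and the argument only needs zero mean and finite common variance, so I would make explicit that the normal mean is $0$ (otherwise cross terms $\mathbb{E}[(\be_m)_k]\mathbb{E}[(\be_n)_l]=\mu^2$ survive and the off-diagonal blocks are no longer zero — they become rank-one, and R.2 can fail). The second is the degenerate-ratio issue in R.2 noted above: since the exact expectation has zero off-diagonal entries, the ratio in R.2 is not literally a finite number, so I would either phrase R.2's conclusion as "$\overline{|M_{ij}|}=0$, hence the ratio exceeds any $r$," or point out that this is exactly why the \emph{Markov Head} concentrates attention on the last token — the softmax of $\be(W^qW^{kT})E^T$ sees a matrix whose expected value is a positive multiple of the identity-like pattern, so after the causal mask the last row puts essentially all its mass on the diagonal (current) position. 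This connection back to the attention mechanism is worth a sentence since it is the whole point of the theorem.
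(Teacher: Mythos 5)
Your proposal is correct and follows essentially the same route as the paper's proof: a direct second-moment computation showing $\mathbb{E}[\be_m A \be_n^T]=0$ for $m\neq n$ and $\mathbb{E}[\be_m A \be_m^T]=\sigma^2\sum_k A_{kk}>0$ by R.1, and your two caveats (the zero-mean assumption and the zero-denominator issue in R.2) are exactly the points the paper handles implicitly and via the $\epsilon$-regularized ratio in its remark.
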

\begin{proof}
See more details in \cref{appendix:sec-main-thm}.
\end{proof}

Based on \cref{thm:mc-main}, we extend the expectation result to a high probability bound. As a corollary, we show that for any random embedding matrix $E$, any given Markov matrix $A$ satisfies \cref{def:mark-head} with high probability.
\begin{corollary}\label{cor:hpb}
Define the diagonal mean and off-diagonal mean of $\Pi\triangleq E A E^T$ as
\begin{equation}
    D \triangleq \frac{1}{K} \sum_{i=1}^K |\Pi_{ii}|, \quad
O \triangleq \frac{1}{K(K-1)} \sum_{i \ne j} |\Pi_{ij}|.
\end{equation} 
Given a Markov matrix $A\in\mathbb{R}^{d\times d}$ satisfying \cref{def:mark-head},
for any $\varepsilon \triangleq \frac{\mathbb{E}[D] - r \mathbb{E}[O]}{2r} > 0$,
there exists constants \( c_1, c_2 > 0 \) such that
\begin{equation}
    \mathbb{P}\left( \frac{D}{O} > r \right) \geq 1 - 2 e^{\left( - c_1 K \tilde\epsilon \right)} - 2 e^{\left( - c_2 K(K - 1) \tilde\epsilon \right)},
\end{equation}
where $\tilde\epsilon$ is defined as $\min(\frac{\varepsilon^2}{\norm{A}^2_F}, \frac{\varepsilon}{\norm{A}} )$, and $\|A\|_F$ is the Frobenius norm of Markov matrix $A$, defined as $\|A\|_F \triangleq \sqrt{ \sum_{i=1}^{m} \sum_{j=1}^{n} |A_{ij}|^2 }$.
\end{corollary}
\begin{proof}
See more details in \cref{apx:cor_hpb}.
\end{proof}

In the next theorem, we can show that \emph{Markov matrix} will be maintained after finite-time fine-tuning. 

\begin{theorem}\label{appendix:thm-finetune}
Suppose all the norm of the gradients are upper bounded by some positive number $B$ and the learning rate $\eta_k$ has a uniformly upper bound $\eta_0$. For any Markov matrix $A^0$, for any training iterations $K<\lfloor\min\Big\{\frac{\rho}{r+1}\frac{\overline{|A_{ij}^0|}}{\eta_0 B},\frac{A_{ii}^0}{\eta_0 B}{\rm\ for\ }i=1,\cdots,d\Big\}\rfloor$, $A^K$ is still a Markov matrix.
\end{theorem}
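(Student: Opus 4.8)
The plan is to track how each entry of the matrix $A^k$ can move over $K$ iterations of a gradient-based fine-tuning scheme, and to show that as long as $K$ is small enough relative to the ratio $\eta_0 B$, neither condition \emph{R.1} nor \emph{R.2} of Definition~\ref{def:mark-head} can be broken. The governing inequality is the crude per-step bound $|A_{ij}^{k+1} - A_{ij}^{k}| \le \eta_k \|\nabla_{A_{ij}} \mathcal{L}\| \le \eta_0 B$, which after $K$ steps telescopes to $|A_{ij}^{K} - A_{ij}^{0}| \le K \eta_0 B$ for every entry $i,j$. Everything else is bookkeeping on top of this single estimate.

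First I would handle \emph{R.1}. Each diagonal entry starts at $A_{ii}^0 > 0$, and by the telescoped bound it stays at least $A_{ii}^0 - K\eta_0 B$; this is strictly positive precisely when $K < A_{ii}^0/(\eta_0 B)$, which is one of the terms in the minimum defining the admissible range of $K$. Next, for \emph{R.2}, I would lower-bound the new diagonal mean and upper-bound the new off-diagonal mean using the same entrywise estimate: $\overline{|A_{ii}^K|} \ge \overline{|A_{ii}^0|} - K\eta_0 B$ and $\overline{|A_{ij}^K|} \le \overline{|A_{ij}^0|} + K\eta_0 B$, where I use that the $\ell_1$-average of perturbations is bounded by the max perturbation. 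Combining these with the hypothesis $\overline{|A_{ii}^0|}/\overline{|A_{ij}^0|} > r$ (in fact with some slack $\rho$, as the appearance of $\rho$ in the theorem statement suggests the authors assume $\overline{|A_{ii}^0|} \ge (r+\rho)\,\overline{|A_{ij}^0|}$ or similar), I would show that the perturbed ratio still exceeds $r$ as long as $K\eta_0 B$ is smaller than a suitable fraction — here $\frac{\rho}{r+1}\overline{|A_{ij}^0|}$ — of the baseline off-diagonal scale. Intersecting the constraint from \emph{R.1} with this constraint from \emph{R.2} gives exactly the floor of the minimum appearing in the statement, so taking $K$ below that value preserves both conditions and hence keeps $A^K$ a \emph{Markov Matrix}.

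The main obstacle is less analytic than expository: the cleanest version of \emph{R.2} requires knowing how much slack there is in the initial ratio, so I would need to make precise what role $\rho$ plays — presumably it quantifies the gap between the initial ratio and the threshold $r$, i.e.\ the Markov property holds \emph{robustly} at initialization. Once that is pinned down, the algebra rearranging $\frac{\overline{|A_{ii}^0|} - K\eta_0 B}{\overline{|A_{ij}^0|} + K\eta_0 B} > r$ into a bound on $K$ is routine: cross-multiply, collect the $K\eta_0 B$ terms, and divide. A secondary subtlety is justifying the uniform gradient bound $B$ and learning-rate bound $\eta_0$ as genuine consequences of "a well-trained fine-tuning process"; the paper treats this as an assumption rather than something to be derived, so I would simply invoke it. In short, the proof is a one-line perturbation estimate plus two elementary inequalities, and the only care needed is in stating the quantitative robustness of the initial Markov property so that the final bound on $K$ comes out as written.
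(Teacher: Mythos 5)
Your proposal is correct and follows essentially the same route as the paper's proof: the same telescoped per-entry perturbation bound $|A^K_{ij}-A^0_{ij}|\le K\eta_0 B$, the same positivity argument for \emph{R.1} via $K<A^0_{ii}/(\eta_0 B)$, and the same rearrangement of $\frac{\overline{A^0_{ii}}-K\eta_0 B}{\overline{|A^0_{ij}|}+K\eta_0 B}>r$ using the slack $\rho$, which the paper likewise introduces only inside the proof as $\overline{A^0_{ii}}\ge (r+\rho)\overline{|A^0_{ij}|}$. Your reading of $\rho$ as the robustness margin of the initial Markov property is exactly the paper's.
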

\begin{proof}
See more details in \cref{apx:thm_finetune}
\end{proof} 

\begin{remark}
According to \cref{appendix:thm-finetune}, we can conclude that during the stage of fine-tuning on the specific downstream RL tasks, \emph{Markov heads} preserve and they can never be switched to \emph{Non-Markov heads}.
\end{remark}

\begin{figure}[!t]
    \centering
    \includegraphics[width=0.45\textwidth]{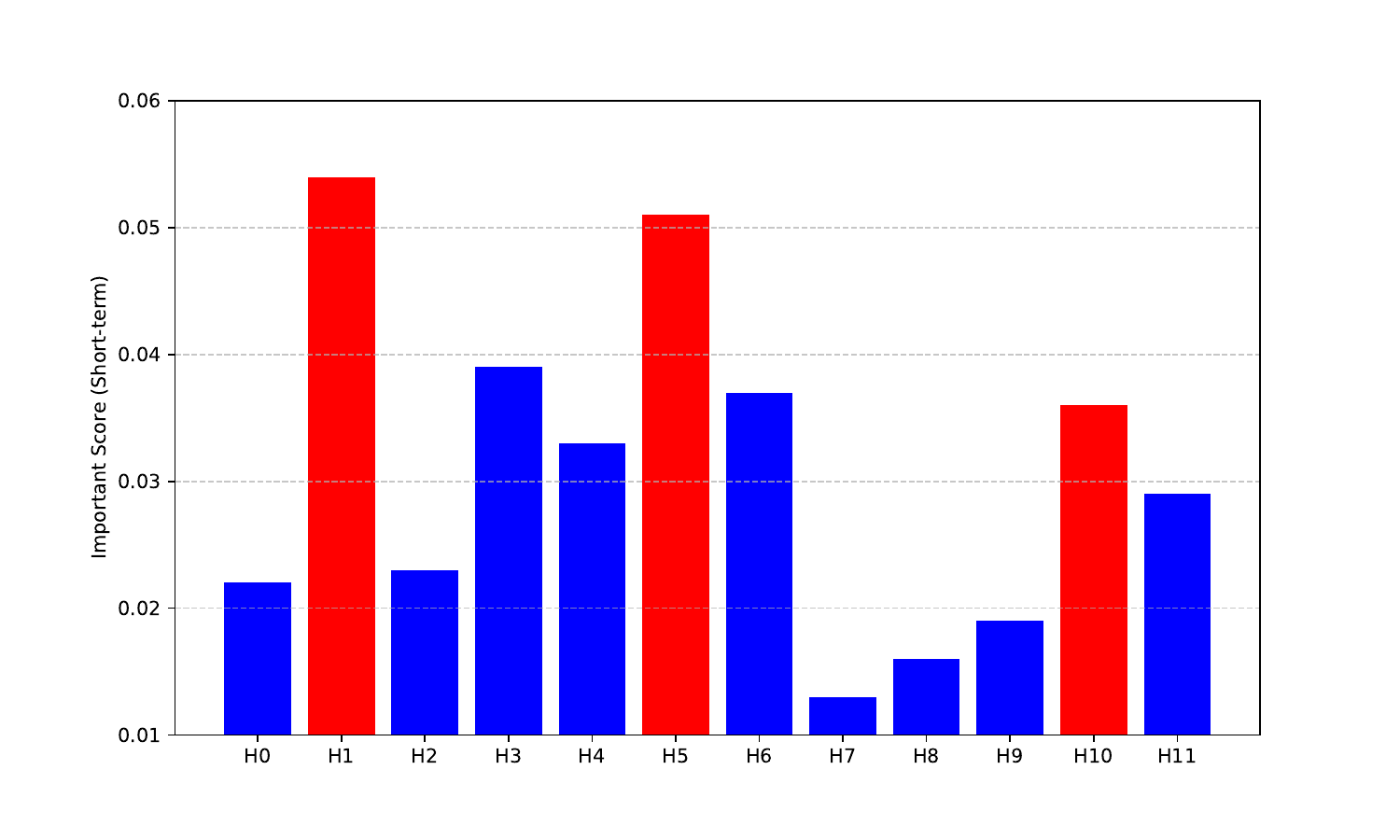}
    \vspace{-0.7cm}
    \caption{Importance score of each head of \texttt{GPT2-DT} in Hopper-medium environment. The red bar represents the importance scores allocated to the \emph{Markov head} in the first attention layer. We also demonstrate importance score in PointMaze-large in \cref{appendix:C}.}
    \label{sec2:weight_bar}
\end{figure}
\subsubsection{Quantitative Analysis}\label{A1}

\xqs{In this section, we conduct quantitative analysis on the weight matrices transferred from PLMs to examine the existence of \emph{Markov heads} in PLMs and their properties.}

First, we measure the importance of each attention head. We initialize DT with pretrained GPT2-small parameters (\texttt{GPT2-DT}) and fine-tune it on RL tasks \cite{reid2022can, shi2024unleashing}. Then, we test \texttt{GPT2-DT} under online environments, obtaining the predicted action $\tilde{a}$. To obtain the importance score of each head, we replace the output of the head $i$ with a zero vector \cite{bau2020understanding, michel2019sixteen} and notate the corresponding predicted action after removing head $i$ as $\Tilde{a}_{-i}$. We define the importance score of the head $i$ as $||\Tilde{a}-\Tilde{a}_{-i}||_2$. The results are visualized in in \cref{sec2:weight_bar}.

\begin{table}[!t]
\centering
\caption{We analyze the weight matrix $W_i^q(W_i^k)^T$ of each head $i$ in \texttt{GPT2} without fine-tuning (Initialization) and after fine-tuning (Fine-tuned). Details of (i) and (ii) can be found in  \cref{def:mark-head}.}
\vspace{0.2cm}
\begin{tabular}{c cc cc}
\toprule
\multirow{2}{*}{\textbf{Head Index}} & \multicolumn{2}{c}{\textbf{Initialization}} & \multicolumn{2}{c}{\textbf{Fine-tuned}} \\
\cmidrule(lr){2-3} \cmidrule(lr){4-5}
& (i) & (ii) & (i) & (ii) \\
\midrule
0 & $\times$ & $\times$ & $\times$ & $\times$  \\
\textbf{1} & $\checkmark$ & $\checkmark$ & $\checkmark$ & $\checkmark$ \\
2 & $\times$ & $\times$ & $\times$ & $\times$ \\
3 & $\checkmark$ & $\times$ & $\times$ & $\times$ \\
4 & $\checkmark$ & $\times$ & $\times$ & $\times$ \\
\textbf{5} & $\checkmark$ & $\checkmark$ & $\checkmark$ & $\checkmark$ \\
6 & $\times$ & $\times$ & $\times$  & $\times$ \\
7 & $\times$ & $\times$ & $\times$ & $\times$ \\
8 & $\times$ & $\times$  & $\times$ & $\times$ \\
9 & $\times$ & $\times$ & $\times$ & $\times$ \\
\textbf{10} & $\checkmark$ & $\checkmark$ & $\checkmark$ & $\checkmark$ \\
11 & $\times$ & $\times$ & $\times$ & $\times$ \\
\bottomrule
\end{tabular}
\label{sec2: Markov property}
\end{table}

It is notable that some heads exhibit higher importance scores than the others. To further explore the inner property of these heads, we focus our analysis on the weight matrix $W_i^q(W_i^k)^T$ of each head $i$. From the statistical data in \cref{sec2: Markov property} and the heatmaps visualized in \cref{sec2:qkmap}, we can conclude that these heads with higher importance scores are exact \emph{Markov heads}. Indeed, \emph{Markov heads} exist in both PLMs and \texttt{GPT2-DT}, and such heads play a significant role on next-action prediction tasks.

\begin{remark}
Let $m$ and $n$ denote the mean of the absolute diagonal and off-diagonal elements in \cref{def:mark-head}. After applying the softmax function, the attention weight on the diagonal becomes $\frac{e^m}{e^m+(d-1)e^n}$, where $d$ is the embedding dimension. To determine a reasonable range for $r$, we assume $\frac{e^m}{e^m+(d-1)e^n}$ should be at least 0.5 to ensure that the head focuses more on the most recent input token. Solving this inequality yields $r > ln(d-1)+1$. In our setup, the embedding dimension $d=64$ and we set $r=8$ to identify \emph{Markov heads} in \texttt{GPT2-DT}.

\end{remark}

To verify both \cref{thm:mc-main} and \cref{appendix:thm-finetune}, we compare the attention score distributions before and after fine-tuning of both \emph{Markov heads} and \emph{Non-Markov heads}. \cref{sec2:heatmap} visualizes the comparison with heatmaps. On the one hand, we observe that the attention distribution of \emph{Markov heads} prior to fine-tuning is consistent with \cref{thm:mc-main}. Even when the embedding vectors for states, actions, and returns-to-go are randomly initialized, \emph{Markov heads} exhibit strong attention to the last input token. This aligns with our theoretical result that $\mathbb{E}[\be_i A \be_i^T] \gg \mathbb{E}[\be_i A \be_j^T]$, where the last token corresponds to the current state $s_t$. On the other hand, we find that after fine-tuning, the weight product $W_i^q (W_i^k)^T$ for all \emph{Markov heads} in \texttt{GPT2-DT} continues to satisfy the definition of a \emph{Markov matrix}, providing empirical support for \cref{appendix:thm-finetune}.

\begin{remark}
By Theorem \ref{thm:mc-main}, we show that under any random embedding matrix, the property of extreme attention to the last input token preserves in expectation. 
\cref{sec2:heatmap} further confirms our theorems that the property of \emph{Markov Heads} holds before and after fine-tuning. 
\end{remark}

\begin{figure}[!t]
    \centering
    \includegraphics[width=0.4\textwidth]{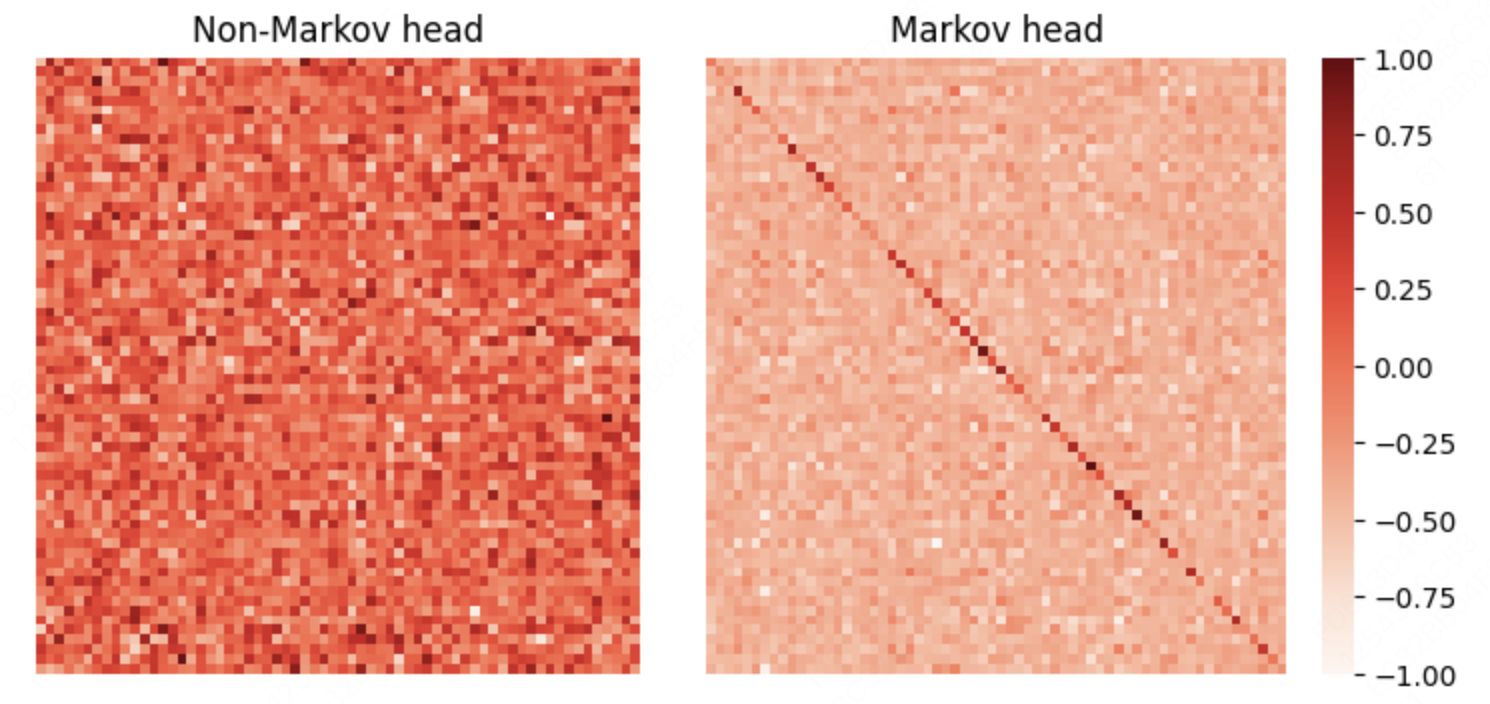}
    \caption{The heatmaps of $W_i^q(W_i^k)^T$ in \emph{Non-Markov head} (left) and \emph{Markov head} (right).}
    \label{sec2:qkmap}
\end{figure}

\begin{figure}[!t]
    \centering
    \includegraphics[width=0.5\textwidth]{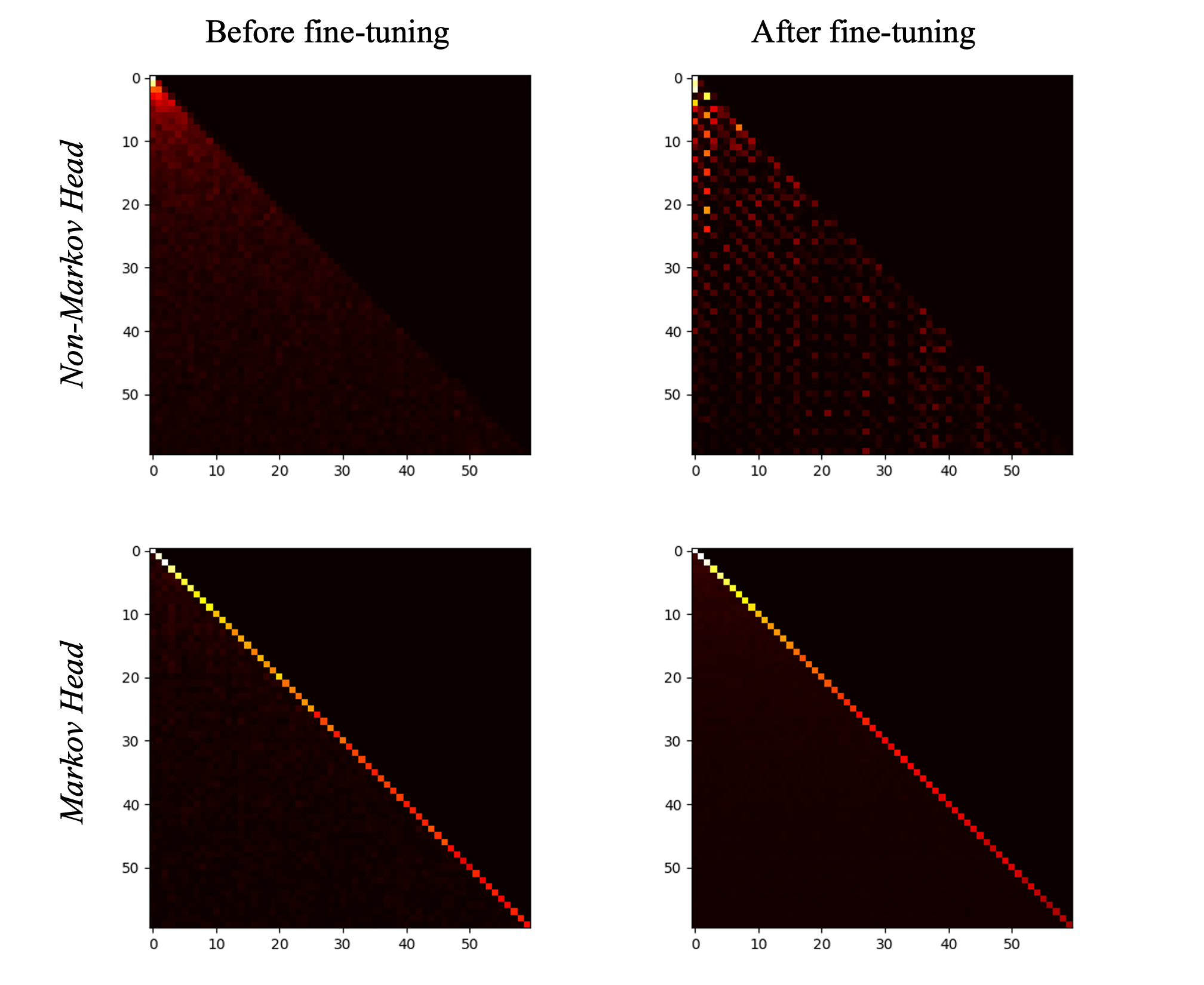}
    \caption{The two sub-figures above / below show the attention score matrices for \emph{Non-Markov head} / \emph{Markov head} before (left) and after fine-tuning (right). Lighter colors represent higher attention scores, while darker colors represent lower scores.}
    \label{sec2:heatmap}
\end{figure}

\begin{remark}
During the fine-tuning process, we set our learning rate as 1e-4 and used 10K steps for warming up, so the upper bound $\eta_0$ is approximately 1e-4. We observe that the magnitude of the maximum norm gradient of $W^q ({W^k})^T$ is on the order of 1e-6, while the average absolute value of the off-diagonal elements is on the order of 1e-2. Given our setting of $r = 8$, these observations indicate that the assumptions required by \cref{appendix:thm-finetune} are easy to satisfy under standard hyperparameter configurations.
\end{remark}
\subsection{\texttt{GPT2-DTMA}: A General Approach with Adaptive Planning Ability}\label{GPT2-DTMA}

Our analysis of \emph{Markov heads} in \texttt{GPT2-DT} reveals that their extreme attention to the last-input token aligns closely with the behavior of memoryless policies \cite{littman1994memoryless}.

However, such behavior may limit performance in \emph{long-term} environments, where past trajectories carry rich contextual information that is crucial for accurate action prediction. In addition, it is often unclear whether long-term or short-term planning is required without prior knowledge of the environment. To address this challenge, we extend \texttt{GPT2-DT} and propose a general framework with adaptive planning ability, \texttt{GPT2-DTMA}, to handle both \emph{short-term} and \emph{long-term} environments adaptively.

From the results in \cref{sec1:test}, we observe that \texttt{GPT2-DT} performs well in MuJoCo Locomotion (\emph{short-term}) environments, but exhibits a significant performance drop in PointMaze (\emph{long-term}) environments, where it underperforms compared to the baseline \texttt{DT}. This suggests that the effectiveness of \emph{Markov heads} is inherently biased toward environments with specific planning ability requirements. This raises a natural question: how can we improve \texttt{GPT2-DT} so that it retains its advantages in \emph{short-term} scenarios while closing the performance gap in \emph{long-term} environments?

\begin{figure}[]
    \centering
    \includegraphics[width=0.37\textwidth]{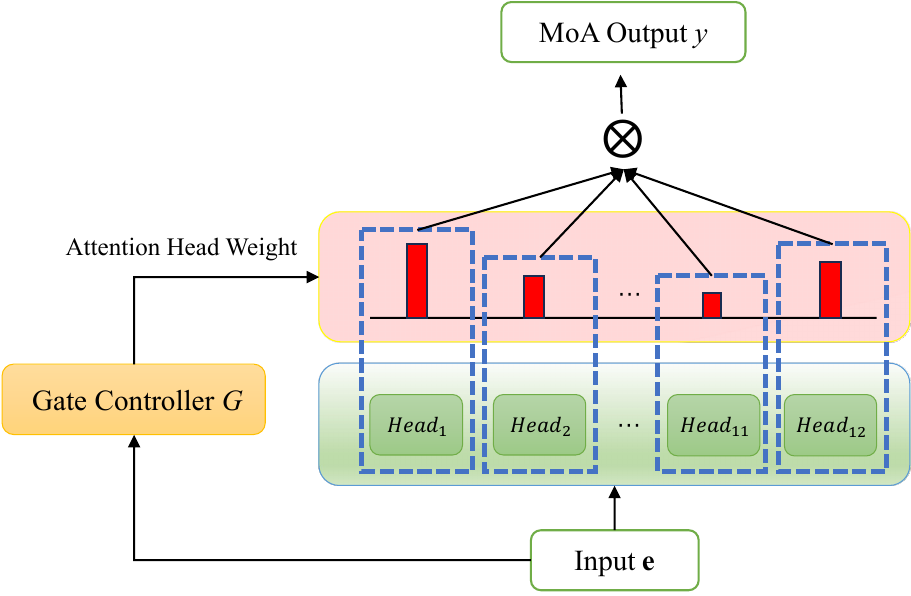}
    \caption{Mixture of Attention heads (MoA) allocates different weights to each attention head through a gate controller network.}
    \label{sec2:moa}
\end{figure}

To address this key challenge, a natural idea is to adaptively control the influence of \emph{Markov heads} based on the implicit planning ability requirement of each environment. While the feed-forward network (FFN) following the attention layers is capable of integrating information, our analysis in \cref{appendix:E} shows that it fails to effectively balance the contributions of \emph{Markov} and \emph{Non-Markov heads}. To overcome this limitation, we propose enhancing \texttt{GPT2-DT} with a Mixture of Attention (MoA) mechanism \cite{zhang2022mixture, jin2024moh}, resulting in our method \texttt{GPT2-DTMA}. A illustration of the proposed architecture is shown in~\cref{sec2:moa}. MoA treats each attention head as an expert specialized in capturing certain features or temporal dependencies. A trainable gating network $G_i$ is introduced to adaptively assign weights to each head $i$’s output. Given an input embedding vector $\be$, MoA computes a weighted sum over all attention head outputs, where the weights are dynamically determined by $G_i$. Formally, the MoA output can be expressed as:
\begin{align}\label{Eq: moe 1}
    y(\be) = \sum_{i=1}^{N}G_i(\be)\cdot {\rm Head}_i(\be).
\end{align}
The dense gate controller network $G$ is implemented as a linear layer $W_g$ followed by a softmax activation:
\begin{align}\label{Eq: moe 2}
    G_i(\be) = {\rm softmax}_i(W_g(\be)),
\end{align}
where $G_i(\be)$ is the trainbale weight assigned to head $i$ and ${\rm Head}_i(\be)$ denotes the output of attention head $i$ with query projection $W_i^q$, keyword projection $W_i^k$, and value projection $W_i^v$. ${\rm Head}_i(\be)$ is defined as:
\begin{align}\label{Eq: attn}
    {\rm Head}_i(\be) = {\rm softmax}(\frac{(\be W_i^q)(EW_i^k)^T}{\sqrt{d_k}})(EW_i^v),
\end{align}
\noindent where $d_k$ denotes the dimension of the embedding layer.

\begin{table}[!t]
\centering
\caption{We compare \texttt{DT} and \texttt{GPT-DT} with mean normalized score ($\uparrow$) in MuJoCo Locomotion and mean test episode length ($\downarrow$) in PointMaze respectively. Dataset names are abbreviated as follows: medium as \emph{m}, medium-replay as \emph{m-r}.}
\vspace{0.2cm}
\begin{tabular}{c c cc}
\toprule
\textbf{Environment} & \textbf{Dataset} & \textbf{DT} & \textbf{GPT-DT} \\
\midrule

\multirow{4}{*}{\shortstack{MuJoCo \\ Locomotion}} & Hopper-\emph{m} & 67.4 & \textbf{77.9}\\
 & Hopper-\emph{m-r} & 74.1 & \textbf{77.9}  \\ 
 & Walker2d-\emph{m} & 74.3  & \textbf{77.1} \\
 & Walker2d-\emph{m-r} & 71.9 & \textbf{74.0}  \\

\midrule

\multirow{3}{*}{PointMaze} & umaze & \textbf{56.3}  & 59.7 \\
 & medium & \textbf{158.7}  & 188.0  \\ 
 & large & \textbf{195.3}  & 257.3  \\

\bottomrule

\end{tabular}
\label{sec1:test}
\end{table}
\section{Experiments}
\begin{table*}[!ht]
\centering
\caption{Mean normalized score ($\uparrow$) with three seeds in MuJoCo Locomotion (\emph{short-term} environment). The dataset names are abbreviated as follows: medium as \emph{m}, medium-replay as \emph{m-r}, medium-expert as \emph{m-e} and Halfcht as Halfcheetah.}
\vspace{0.2cm}
\begin{tabular}{c ccccccc}
\toprule
\textbf{Dataset} & \textbf{CQL} & \textbf{DT} & \textbf{DC} & \textbf{DTMA} & \textbf{GPT2-DT} & \textbf{GPT2-DTMA} & \textbf{GPT2-P} \\
\midrule
Hopper-m & $57.7_{\pm 1.6}$ & $67.4_{\pm 1.7}$ & $\textbf{79.7} _{\pm 1.1}$ & $65.2_{\pm 2.3}$ & $77.9_{\pm 1.1}$ & $77.4_{\pm 0.8}$ & $70.8_{\pm 1.4}$ \\
Hopper-m-r & $73.7_{\pm 4.4}$ & $74.1_{\pm 4.1}$ & $\textbf{82.3} _{\pm 2.5}$ & $74.7_{\pm 3.7}$ & $77.9_{\pm 2.1}$ & $80.4_{\pm 1.8}$ & $75.3_{\pm 2.9}$ \\ 
Hopper-m-e & $107.3_{\pm 0.4}$ & $108.6_{\pm 0.3}$ & $111.4_{\pm 0.2}$ & $109.4_{\pm 0.1}$ & $111.7_{\pm 0.1}$ & $111.6_{\pm 0.1}$ & $109.4_{\pm 0.2}$ \\
Walker2d-m & $72.6_{\pm 1.3}$ & $74.3_{\pm 1.9}$ & $79.3_{\pm 1.3}$ & $74.0_{\pm 1.4}$ & $77.1_{\pm 1.1}$ & $\textbf{79.9} _{\pm 0.8}$ & $73.3_{\pm 2.7}$ \\
Walker2d-m-r & $78.3_{\pm 2.2}$ & $71.9_{\pm 2.7}$ & $\textbf{79.2} _{\pm 1.8}$ & $70.5_{\pm 3.4}$ & $74.0_{\pm 2.9}$ & $77.0_{\pm 2.5}$ & $71.5_{\pm 3.5}$ \\
Walker2d-m-e & $107.8_{\pm 0.2}$ & $107.6_{\pm 0.1}$ & $108.3_{\pm 0.2}$ & $108.1_{\pm 0.2}$ & $108.1_{\pm 0.1}$ & $\textbf{108.6} _{\pm 0.1}$ & $108.2_{\pm 0.1}$ \\
Halfcheetah-m & $43.1_{\pm 0.3}$ & $42.8_{\pm 0.4}$ & $43.1_{\pm 0.2}$ & $43.5_{\pm 0.3}$ & $42.6_{\pm 0.5}$ & $43.0_{\pm 0.4}$ & $42.3_{\pm 0.6}$ \\
Halfcheetah-m-r & $\textbf{42.4} _{\pm 1.9}$ & $39.2_{\pm 2.4}$ & $38.3_{\pm 3.4}$ & $38.8_{\pm 3.3}$ & $39.8_{\pm 3.1}$ & $40.3_{\pm 2.9}$ & $39.6_{\pm 3.1}$ \\
Halfcheetah-m-e & $88.4_{\pm 1.1}$ & $91.9_{\pm 1.3}$ & $92.1_{\pm 1.3}$ & $91.8_{\pm 1.1}$ & $92.3_{\pm 0.7}$ & $\textbf{92.5} _{\pm 1.1}$ & $92.2_{\pm 0.5}$ \\
\bottomrule
\end{tabular}
\label{sec3:GPT2-DT_all_result}
\end{table*}

\begin{table*}[!ht]
\centering
\caption{Episode length ($\downarrow$) in different size of Maze (\emph{long-term} environment).}
\vspace{0.2cm}
\begin{tabular}{c c cccccc}
\toprule
\textbf{Environment} & \textbf{Dataset} & \textbf{CQL} & \textbf{DT} & \textbf{DC} & \textbf{DTMA} & \textbf{GPT2-DT} & \textbf{GPT2-DTMA} \\
\midrule
\multirow{3}{*}{PointMaze} & umaze & $58.0_{\pm 1.0}$ & $56.3_{\pm 1.7}$ & $\textbf{54.3}_{\pm 2.3}$ & $58.7_{\pm 1.3}$ & $59.0_{\pm 3.0}$ & $55.0_{\pm 2.0}$ \\
& medium & $\textbf{134.0}_{\pm 8.0}$ & $158.7_{\pm 7.3}$ & $236.3_{\pm 6.7}$ & $155.0_{\pm 8.0}$ & $188.0_{\pm 13.0}$ & $146.3_{\pm 9.3}$ \\ 
& large & $\textbf{162.0}_{\pm 6.0}$ & $195.3_{\pm 14.7}$ & $288.0_{\pm 12.0}$ & $217.0_{\pm 16.0}$ & $257.3_{\pm 13.3}$ & $203.0_{\pm 11.0}$ \\
\midrule
\end{tabular}
\label{sec5:long-term-all}
\end{table*}

From the previous analysis, we identified the existence of \emph{Markov head} in both PLMs and \texttt{GPT2-DT} and showed that they assign higher attention weights to the current state.
However, it remains unclear whether this property benefits all reinforcement learning tasks.
Prior studies \citep{reid2022can, shi2024unleashing, yang2024pre} evaluate PLM-initialized decision transformers mainly on \emph{short-term} environments such as MuJoCo Locomotion and Atari. In contrast, we aim to investigate whether \emph{Markov heads} also contribute to next-action prediction in \emph{long-term} environments.
To this end, we adopt LaMo \citep{shi2024unleashing}, a representative PLM-based decision transformer, as the implementation of our baseline \texttt{GPT2-DT}, and refer to it as such throughout the rest of the paper. We then compare our proposed method, \texttt{GPT2-DTMA}, with several baselines, including the standard Decision Transformer (\texttt{DT}) \citep{chen2021decision}, the value-based offline RL algorithm \texttt{CQL} \citep{kumar2020conservative}, and Decision Convformer (\texttt{DC}) \citep{kim2023decision}. Full experimental setup details are provided in \cref{appendix:B}.

\subsection{\emph{Short-term} and \emph{long-term} environments}
In this section, we provide brief introduction to each experimental task and categorize them into \emph{short-term} and \emph{long-term} environments, as follows:
\begin{itemize}
    \item MuJoCo Locomotion tasks \cite{fu2020d4rl} are physics engines designed primarily for simulating and controlling articulated rigid body systems. \citet{kim2023decision} believe that \emph{short-term} planning ability is enough for achieving a good performance in those tested environments. If a policy can control the agent to reach further \yh{or move faster} while maintaining stable posture, then the normalized score will be higher. 
    \item In a Maze task, such as PointMaze, a policy model is trained to direct the object from a random start position to the goal in a maze. Note that this point mass can only observe its own position and the position of the goal; it has no knowledge of the positions of obstacles in the maze. The policy model needs to learn how to outline the entire maze based on past trajectory information. Therefore, this environment is generally considered to require model's \emph{long-term} planning ability \cite{lin2022switch}. 
    We evaluate whether the policy model can guide the point mass to the goal using a shorter episode length as the evaluation metric.
\end{itemize}

\subsection{Main Results}
To validate the effectiveness of MoA in regulating the attention heads of \texttt{GPT2-DT}, we evaluate the performance of \texttt{GPT2-DTMA} proposed in \cref{GPT2-DTMA} across both short-term and long-term environments. As shown in~\cref{sec3:GPT2-DT_all_result}, \texttt{GPT2-DTMA} outperforms \texttt{DT} in most \emph{short-term} environments.
Notably, MoA introduces only a modest increase in model complexity—approximately a 3\% rise in parameter count over the base \texttt{DT}—making it a lightweight yet effective enhancement.

In \emph{long-term} environments, \cref{sec5:long-term-all} shows that integrating the MoA architecture into \texttt{GPT2-DT} enables the agent to reach the goal in PointMaze with shorter episode lengths.
However, as stated in \cref{appendix:thm-finetune}, MoA can only regulate the influence of \emph{Markov heads}, whose inherent properties are preserved through fine-tuning. Consequently, due to the adverse impact of \emph{Markov heads} in long-term scenarios, \texttt{GPT2-DTMA} can reduce the performance gap with \texttt{DT} while suffering difficulty to outperform it. Detailed trends in how MoA adjusts the weights of \emph{Markov} and \emph{Non-Markov} heads across short-term and long-term environments are provided in \cref{appendix:moa_train_trend}.

By comparing the performance of the same model in \cref{sec3:GPT2-DT_all_result} and \cref{sec5:long-term-all}, we observe that both \texttt{DC} and \texttt{CQL} exhibit strong performance only in specific scenarios that align with their respective planning ability biases, but struggle in other settings. In contrast, our model \texttt{GPT2-DTMA} with adaptive planning ability maintains relatively consistent and competitive performance across both \emph{short-term} and \emph{long-term} environments.

Furthermore, we conduct an ablation study by incorporating the MoA mechanism into a randomly initialized \texttt{DT}, referred to as \texttt{DTMA}. \cref{sec3:GPT2-DT_all_result} shows that this method does not lead to performance gains, as evidenced by the comparable results between \texttt{DTMA} and \texttt{DT}. This suggests that the effectiveness of MoA primarily lies in its ability to modulate the influence of \emph{Markov heads} when predicting the next action, a property not present in models without pretraining.

\begin{table}[!t]
\centering
\caption{\texttt{GPT2-DTMA} with different context length ($k$) in PointMaze-large. $G_{\text{\text{Markov}}}$ represents the sum of weights for all \emph{Markov heads}. $R_{\text{\text{Markov}}}$ represents the percentage of $G_{\text{\text{Markov}}}$ relative to that when $k=10$.}
\vspace{0.2cm}
\begin{tabular}{c ccc}
\toprule
$\boldsymbol{k}$ & \textbf{Episode length} ($\downarrow$) & $\boldsymbol G_{\text{\text{Markov}}}$ & $\boldsymbol R_{\text{\text{Markov}}}$ \\
\midrule
$k=10$ & 236.3 & 0.397 & 100.0\% \\ 
$k=20$ & 230.7 & 0.310 & 78.1\% \\
$k=30$ & 226.0 & 0.234 & 58.9\% \\
$k=40$ & 214.3 & 0.188 & 47.3\% \\
$k=50$ & 203.0 & 0.167 & 42.1\% \\
\bottomrule

\end{tabular}
\label{sec4:long-term-with-k}
\end{table}

\subsection{Attention distribution of \texttt{GPT2-DTMA} in different environments} 

In this section, we investigate whether \texttt{GPT2-DTMA} is capable of adaptively adjusting the weights of \emph{Markov heads} to better adapt to different environments. We use the most challenging task, PointMaze-large, as our testbed, which is a typical long-term environment.
In \emph{long-term} environments, models are required to extract useful information from past sequences, rather than relying primarily on current state. However, the \emph{Markov heads} tend to focus attention disproportionately on the most recent input, limiting the model’s ability to leverage long-range context and thereby negatively affecting performance. By using MoA to reduce the influence of \emph{Markov heads}, \texttt{GPT2-DTMA} can alleviate their negative impact on action predictions. 
As shown in \cref{sec4:long-term-with-k}, it can be seen that as the length of context length increases, both the sum of \xqs{weight} $G_{\text{\text{Markov}}}$ assigned to \emph{Markov heads} and the number of steps required for the point mass to reach the goal gradually decrease. This suggests that \texttt{GPT2-DTMA} endeavors to capture previous information that is further away from the current state. 

Similarly, we conducted a comparative experiment examining the effect of context length in \emph{short-term} environments. 
As context length increases, the model shows a growing tendency to attend to distant information, leading to an overall decrease in $G_{\text{Markov}}$, as shown in Table~\ref{sec4:short-term-with-k}.
However, at the same context length, the $R_{\text{Markov}}$ in \emph{short-term} environments is significantly higher compared to those in \emph{long-term} environments. 
This suggests that, in \emph{short-term} environments, MoA allows \emph{Markov heads} to retain a higher influence in action prediction, whereas in \emph{long-term} environments, the model relies more heavily on other attention heads.
These findings indicate that MoA can effectively identify whether the environment requires \emph{short-term} or \emph{long-term} planning ability and adaptively modulate weights across different heads to suit the environment.
\section{Discussion}

To further validate the critical role of \emph{Markov head} in \emph{short-term} environments, we design two different approaches: 
The first approach explicitly reduces the influence of \emph{Markov heads} by lowering their MoA weights. The second approach modifies the initialization of the \texttt{GPT2-DT} model, using parameters that do not satisfy the \emph{Markov matrix} property.

\subsection{Reducing the weights of \emph{Markov heads} in action prediction.}\label{dis:reduce}
To assess the role of \emph{Markov heads} in short-term environments, we conduct a comparative experiment against \texttt{GPT2-DTMA} by intentionally reducing their contribution to action prediction. To this end, we develop a variant model called \texttt{GPT2-P}, which introduces a penalty on the weights of \emph{Markov heads} in the loss function: 
\begin{align}
    \mathcal L_{\texttt{GPT2-P}}=L_{\texttt{DT}}+\alpha\cdot\sum_{i \in [1,5,10]}G_i(x),
\end{align}
\noindent where $[1, 5, 10]$ are the indices of \emph{Markov heads}. 

\begin{table}[]
\centering
\caption{\texttt{GPT2-DTMA} performance using different context length ($k$) in Hopper-medium environment.
$G_{\text{\text{Markov}}}$ represents the sum of weights for all \emph{Markov heads}.
$R_{\text{\text{Markov}}}$ represents the percentage of $G_{\text{\text{Markov}}}$ relative to that when $k=10$.
}
\vspace{0.2cm}
\begin{tabular}{c ccc}
\toprule
$\boldsymbol k$ & \textbf{Normalized score} ($\uparrow$) & $\boldsymbol G_{\text{Markov}}$ & $\boldsymbol R_{\text{Markov}}$ \\
\midrule
$k=10$ & 77.1 & 0.505 & 100.0\% \\ 
$k=20$ & 77.4 & 0.488 & 96.6\% \\
$k=30$ & 76.3 & 0.480 & 95.0\% \\
$k=40$ & 77.0 & 0.461 & 91.3\% \\
$k=50$ & 75.8 & 0.440 & 87.1\% \\
\bottomrule
\end{tabular}
\label{sec4:short-term-with-k}
\end{table}

\begin{table*}[]
\centering
\caption{Mean normalized score ($\uparrow$) of DT initialized with different pretrained parameters in MuJoCo Locomotion.}
\vspace{0.2cm}
\begin{tabular}{c cccccc}
\toprule
\textbf{Dataset} & \textbf{DT} & \textbf{GPT2-DT} & \textbf{CLIP-DT} & \textbf{GPTJ-DT} \\
\midrule
Hopper-m & $67.4_{\pm 1.7}$ & $77.9_{\pm 1.1}$ & $66.9_{\pm 1.3}$ & $71.6_{\pm 1.5}$ \\
Hopper-m-r & $74.1_{\pm 4.1}$ & $77.9_{\pm 2.1}$ & $76.1_{\pm 2.5}$ & $72.9_{\pm 1.7}$ \\ 
Hopper-m-e & $108.6_{\pm 0.3}$ & $111.7_{\pm 0.1}$ & $109.2_{\pm 0.2}$ & $109.6_{\pm 0.1}$ \\
Walker2d-m & $74.3_{\pm 1.9}$ & $77.1_{\pm 1.1}$ & $74.2_{\pm 0.8}$ & $74.8_{\pm 1.2}$ \\
Walker2d-m-r & $71.9_{\pm 2.7}$ & $74.0_{\pm 2.9}$ & $70.3_{\pm 4.1}$ & $70.4_{\pm 2.3}$ \\
Walker2d-m-e & $107.6_{\pm 0.1}$ & $108.1_{\pm 0.1}$ & $107.9_{\pm 0.3}$ & $108.1_{\pm 0.1}$ \\
Halfcheetah-m & $42.8_{\pm 0.4}$ & $42.6_{\pm 0.5}$ & $42.6_{\pm 0.6}$ & $42.9_{\pm 0.3}$ \\
Halfcheetah-m-r & $39.2_{\pm 2.4}$ & $39.8_{\pm 3.1}$ & $39.8_{\pm 3.3}$ & $39.3_{\pm 2.5}$ \\
Halfcheetah-m-e & $91.9_{\pm 1.3}$ & $92.3_{\pm 0.7}$ & $92.1_{\pm 0.9}$ & $92.1_{\pm 0.9}$ \\
\bottomrule

\end{tabular}
\label{sec3:plm}
\end{table*}

By applying a penalty coefficient of $\alpha = 0.1$, the cumulative weight assigned to the \emph{Markov heads} is reduced from 0.488 in \texttt{GPT2-DTMA} to 0.184 in its variant, \texttt{GPT2-P}. As shown in \cref{sec3:GPT2-DT_all_result}, \texttt{GPT2-P} fails to match the policy performance of \texttt{GPT2-DTMA} in short-term tasks. This degradation underscores the significance of \emph{Markov heads} in improving action selection under limited planning ability. In many cases, reducing their contribution causes \texttt{GPT2-P} to regress to the performance of the standard \texttt{DT} baseline.
Furthermore, these results suggest that other components pretrained for NLP tasks—such as the feed-forward network (FFN) layers—may offer limited transferability to the reinforcement learning domain.

\subsection{Initializing DT using pretrained parameters without \emph{Markov Matrix} property.}\label{dis:other-pretrain}

In our offline RL setup with an autoregressive formulation, we initialize the model using pretrained text encoders from CLIP and GPT-J. While CLIP includes both text and image encoders, we adopt only its text encoder, which—like GPT-J—is an autoregressive Transformer pretrained on large-scale textual data. CLIP is trained to align textual descriptions with corresponding images, whereas GPT-J is a GPT2-style causal language model, but pretrained on a different dataset (Pile) rather than the WebText corpus used by GPT2. Despite architectural similarity and pretraining on textual data, we find that the attention layers in both CLIP and GPT-J do not exhibit the \emph{Markov matrix} structure observed in our \texttt{GPT2-DTMA} model. We attribute this to differences in training objectives: unlike GPT2, whose next-token prediction loss may encourage local temporal dependencies, the contrastive objective in CLIP and the different pretraining distribution of GPT-J do not appear to induce the same structural bias in the attention heads.

As shown in Table~\ref{sec3:plm}, initializing \texttt{DT} with pretrained parameters from CLIP (\texttt{CLIP-DT}) or GPT-J (\texttt{GPTJ-DT}) does not yield performance comparable to \texttt{GPT2-DT}. The presence of \emph{Markov heads} in \texttt{GPT2-DT} enables the model to allocate sufficient attention to the current state, resulting in improved performance over the vanilla \texttt{DT}. The absence of the \emph{Markov matrix} structure in \texttt{CLIP-DT} is expected, given its contrastive training objective. However, it is surprising that GPT-J—despite using the same next-token prediction loss as GPT2—also fails to exhibit this property. This discrepancy may be attributed to differences in the pretraining data distribution and scale.
\section{Conclusion}
In this paper, we addressed a key question: how do pretrained language models (PLMs) influence fine-tuning performance in offline reinforcement learning? We introduced the concept of \emph{Markov heads} and investigated their distinctive properties. Our analysis reveals that \emph{Markov heads} are the primary transferable inductive bias from PLMs, beneficial in \emph{short-term} tasks but detrimental in \emph{long-term} scenarios. To address this challenge, we proposed \texttt{GPT2-DTMA}, a general and adaptive approach that enables the model to automatically adjust its planning capacity across different environments. Experimental results demonstrate that \texttt{GPT2-DTMA} significantly narrows the performance gap in both short- and long-term settings. We believe that understanding the mechanisms behind PLM transfer opens promising directions for future research, such as optimizing the use of pretrained parameters in RL or designing pretraining objectives tailored for decision-making tasks.

\section*{Acknowledgements}
This work is partially supported by NSFC (61732006) and
Public Computing Cloud, Renmin University of China.

\section*{Impact Statement}

This paper presents work whose goal is to advance the field of pretrained decision transformer. There are many potential societal consequences of our work, none which we feel must be specifically highlighted here.

\bibliography{main}
\bibliographystyle{icml2025}

\newpage
\appendix
\onecolumn

\section{Pretrained Language Models for offline RL tasks}\label{apx:cross_domain}

\begin{figure}[H]
    \centering
    \includegraphics[width=0.7\textwidth]{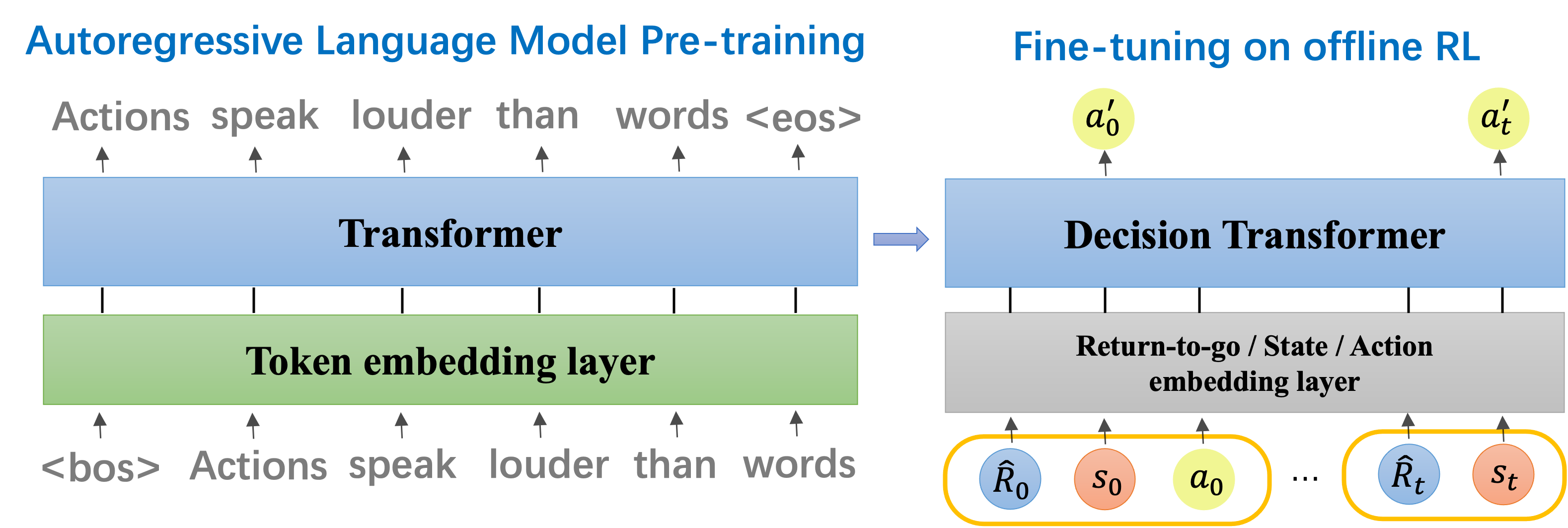}
    \caption{The paradigm for transferring PLM parameters to offline RL tasks.}
    \label{app:desc}
\end{figure}

\cref{app:desc} follows the paradigm for transferring knowledge from the natural language domain to the RL domain in previous work \cite{reid2022can}. In the first stage, we pretrain Transformer with language data to predict the next token. In the second stage, we fine-tune pretrained Transformer on downstream offline RL tasks to predict the next actions.

\section{Comparisons of attention score distributions}\label{appendix:dist_cmp}

\begin{figure}[H]
    \centering
    \includegraphics[width=1.0\textwidth]{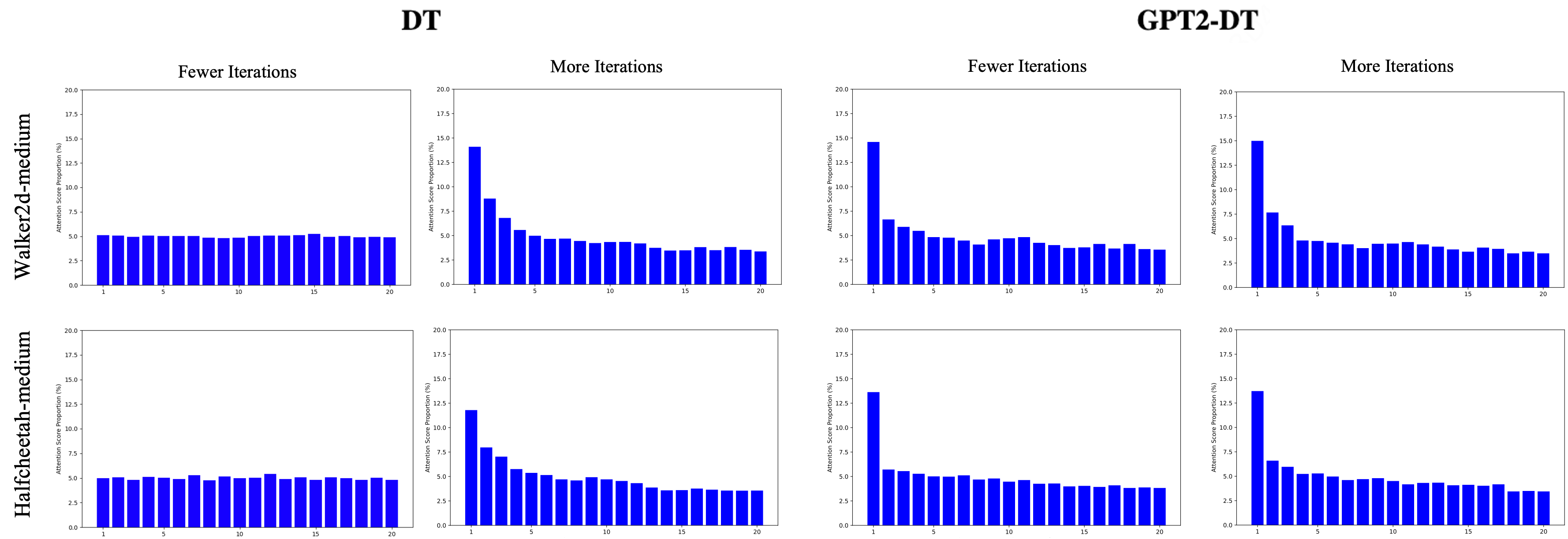}
    \caption{We compare the attention score distribution of DT and GPT2-DT during the process of fine-tuning within fewer iterations and more iterations in Walker2d-medium (upper row) and Halfcheetah-medium (lower row) environments.}
    \label{figure:dist_cmp}
\end{figure}

We explore how the attention score distribution differs between \texttt{DT} and \texttt{GPT2-DT} when fine-tuning with a smaller number of iterations versus a larger number of iterations. \cref{figure:dist_cmp} illustrates that in different environments, \texttt{GPT2-DT} can still learn the ability to focus on closer positional information in fewer training iterations compared to \texttt{DT}. This ability to concentrate on recent information is precisely what is needed to excel in short-term environments.

\section{Experiment setup}\label{appendix:B}

We use GPT2-small pretrained parameters to initialize DT transformer layer. To fine-tune pretrained language models, we set the learning rate as 1e-4 with 10K warmup steps and weight decay as 1e-4. We fine-tune the model with 100K steps and batch size as 64. Our training process uses a single Nividia A100 and four Tesla V100 with cuda version of 12.3. 
During training, the agent is evaluated every 500 steps, by running $20$ episodes. We report results averaged among 3 seeds (seeds $0, 1, 2$ are used).
The offline datasets used for training the model are collected in D4RL~\citep{fu2020d4rl} benchmark. For MuJoCo Locomotion tasks, the tasks we selected are \textit{halfcheetah, hopper} and \textit{walker2d}. Each task has 3 datasets collected with different strategies: \textit{medium} dataset consisting of 1M interaction samples of a middle-level RL agent. \textit{medium-replay} includes the whole replay buffer when training an RL agent until it reaches middle-level performance. \textit{medium-expert} is a mixture of the \textit{medium} dataset and 1M interaction samples of an expert-level RL policy. For Maze tasks, \textit{umaze}, \textit{medium} and \textit{large} represent square mazes with side lengths of 5, 8, and 12, respectively.

\section{Importance of \emph{Markov head} in \emph{long-term} environment}\label{appendix:C}

\begin{figure}[H]
    \centering
    \includegraphics[width=0.5\textwidth]
    {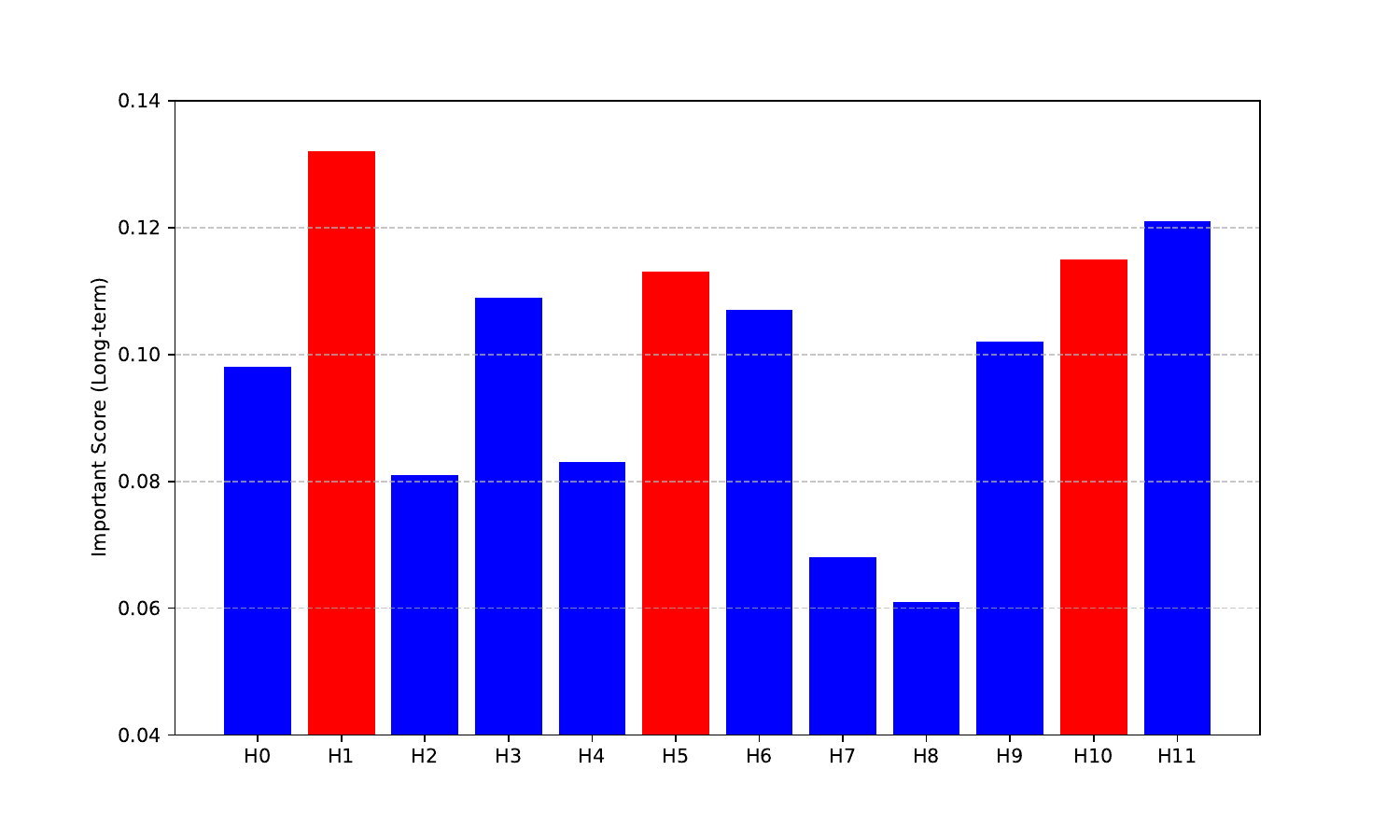}
    \caption{Importance score on different heads in PointMaze (\emph{long-term} environment). The red bar represents the importance scores allocated to the \emph{Markov head} in the first attention layer.}
    \label{app:bar}
\end{figure}

Figure. \ref{app:bar} shows the varying importance of different attention heads for predicting actions. Similar to the situation in a \emph{short-term} environment, the \emph{Markov head} still plays a crucial role in predicting actions. It indicates that the poor performance of \texttt{GPT2-DT} in \emph{long-term} environments may stem from the \emph{Markov head} focusing only on the current state.

\section{Theoretical Proofs}

\subsection{Proof of \cref{thm:mc-main}}\label{appendix:sec-main-thm}
We will give a simple proof showing that \emph{Markov matrix}, i.e., $W_i^q {W^k_i}^T$ satisfies \cref{def:mark-head}, will pay most attention on the last-input token under random embedding initializations, i.e., $\mathbb{E}\Big[E (W_i^q {W^k_i}^T) E^T\Big]$ also satisfies \cref{def:mark-head}, where $E\in\mathbb{R}^{K \times d}$ is a random embedding matrix, $K$ is the sequence length and in this case, $A=\mathbb{E}\Big[E (W_i^q {W^k_i}^T) E^T\Big]$.
\begin{proof}
For any $\be_m = (e_m^1,e_m^2,\cdots,e_m^d)$ and $\be_n=(e_n^1,e_n^2,\cdots,e_n^d)$, where $m,n\in\{1,\cdots,K\}$ and $m\neq n$, we have
\begin{equation}\label{eq:decop_proof}
    \begin{aligned}   
        \mathbb{E}[\be_m A \be_n^T]& = \sum_{i=1}^d \sum_{j=1}^d A_{ij}\mathbb{E}[e_m^i e_n^j]\\
         &= \sum_{i=1}^d A_{i,i}\mathbb{E}[e_m^i e_n^i] + \sum_{i,j=1,\cdots,d,i\neq j} A_{i,j}\mathbb{E}[e_m^i e_n^j]\\
         &= \sum_{i=1}^d A_{i,i}\mathbb{E}[e_m^i e_n^i] +  \Big(\mathbb{E}[e_m^i]\Big)^2 \sum_{i,j=1,\cdots,d,i\neq j}A_{ij}\\
         &= \sum_{i=1}^d A_{i,i}\mathbb{E}[e_m^i e_n^i] = 0,
    \end{aligned}
\end{equation}
where the last equality is because $\mathbb{E}[e_m^i]=\mathbb{E}[e_n^i]=0$. And we know that $e_m^i$ and $e_n^i$ are i.i.d, hence by taking the expectation, we can obtain that
\begin{equation}
    \begin{aligned}
        \mathbb{E}[e_m^i e_n^i] = \mathbb{E}[e_m^i]\mathbb{E}[ e_n^i] = \Big(\mathbb{E}[e_m^i]\Big)^2 <\mathbb{E}\Big[(e_m^i)^2\Big],
    \end{aligned}
\end{equation}
where the last inequality is due to $Var[e_m^i]= \mathbb{E}\Big[(e_m^i)^2\Big] - \Big(\mathbb{E}[e_m^i]\Big)^2 >0$. Also, by simply replacing $\be_n^T$ with $\be_m^T$ in \cref{eq:decop_proof}, we have
\begin{equation}
    \begin{aligned}
        \mathbb{E}[\be_mA\be_m^T]&=\sum_{i=1}^d A_{i,i}\mathbb{E}\Big[(e_m^i)^2\Big] +  \Big(\mathbb{E}[e_m^i]\Big)^2 \sum_{i,j=1,\cdots,d,i\neq j}A_{ij}\\
        &= \sum_{i=1}^d A_{i,i}\mathbb{E}\Big[(e_m^i)^2\Big] = \sum_{i=1}^d A_{i,i}.
    \end{aligned}
\end{equation}
Since we know that  $\{A_{i,i}\}_{i=1}^d$ are all positive, we can conclude that
$\mathbb{E}(\be_m A \be_m^T) > \mathbb{E}(\be_m A \be_n^T)=0$. Therefore, for any $r>0$, \cref{def:mark-head} will be satisfied.
\end{proof}

\subsection{Proof of \cref{cor:hpb}} \label{apx:cor_hpb}
\begin{proof}
By \cref{thm:mc-main}, we know that $\frac{\mathbb{E}[D]}{\mathbb{E}[O]} > r$ for some $r>0$. And for any $\varepsilon\triangleq \frac{\mathbb{E}[D] - r \mathbb{E}[O]}{2r} > 0$, we have $\frac{\mathbb{E}[D] - \varepsilon}{\mathbb{E}[O] + \varepsilon} > r$. We apply concentration bounds for sub-exponential random variables, noting that \( |\Pi_{ij}| \) is sub-exponential with sub-Gaussian proxy $\norm{A}_F$. For the diagonal mean \( D \), there are \( K \) i.i.d. terms, and for the off-diagonal mean \( O \), there are \(K(K - 1) \) terms. By Bernstein inequality and Hanson-Wright inequality, we obtain:
\begin{equation}
    \mathbb{P}\left( |D - \mathbb{E}[D]| \geq \varepsilon \right) \leq 2 \exp\left( - c_1 K \min\left( \frac{\varepsilon^2}{\norm{A}^2_F}, \frac{\varepsilon}{\norm{A}} \right) \right),
\end{equation}
and
\begin{equation}
    \mathbb{P}\left( |O - \mathbb{E}[O]| \geq \varepsilon \right) \leq 2 \exp\left( - c_2 K(K - 1) \min\left( \frac{\varepsilon^2}{\norm{A}^2_F}, \frac{\varepsilon}{\norm{A}} \right) \right).
\end{equation}
Taking union bound and noting that if $D\geq \mathbb{E}[D]-\varepsilon$ and $O\leq \mathbb{E}[O] + \varepsilon$ , then
\begin{equation}
    \frac{D}{O} > \frac{\mathbb{E}[D] - \varepsilon}{\mathbb{E}[O] + \varepsilon} > r.
\end{equation}
Then we have
\begin{equation}
    \mathbb{P}(\frac{D}{O}\leq r) \leq \mathbb{P}(D < \mathbb{E}[D]-\varepsilon) + \mathbb{P}(O> \mathbb{E}[O] + \varepsilon) \leq \delta_0,
\end{equation}
where $\delta_0 \triangleq 2 \exp\left( - c_1 K \min\left( \frac{\varepsilon^2}{\norm{A}^2_F}, \frac{\varepsilon}{\norm{A}} \right) \right) + 2 \exp\left( - c_2 K(K - 1) \min\left( \frac{\varepsilon^2}{\norm{A}^2_F}, \frac{\varepsilon}{\norm{A}} \right) \right)$. Therefore, under the definition of $\varepsilon$, we obtain the desired high-probability bound.
\end{proof}

\subsection{Proof of \cref{appendix:thm-finetune}}\label{apx:thm_finetune}

In this section, we will discuss the phenomenon in Table. 2 and give a theoretical analysis for \cref{appendix:thm-finetune}. 

\begin{proof}
Suppose the total number of training iterations is $K$, and  $A^0=W^q {W^k}^T$ is a \emph{Markov matrix}, the difference between the initialized parameters $A^0$ and the trained parameters is denoted as $\Delta A^K = \sum_{k=1}^K \eta_k g_k$, where $\eta_k$ denotes the learning rate at the $k$-th iteration and $A^K = A^0 -\Delta A^K $. By choosing the adam optimizer, we know that there exists $\eta_0>0$ such that $\eta_k \leq \eta_0$ for all $k=1,\dots,K$. Hence, we have
\begin{equation}
        0_{d\times d}\leq |\Delta A^K|  = |\sum_{k=1}^K \eta_k g_k| \leq\sum_{k=1}^{K} \eta_k |g_k| \leq \eta_0 KB\cdot\mathbf{1}_{d\times d} .
\end{equation}
Let $a_{max}= \max\{{|\Delta A^K}_{i,j}|, where\ i,j=1,\cdots,d\}$ and $a_{max}\leq \eta_0 K B$. Due to \cref{def:mark-head}, we know that there exists $\rho>0$, such that $\frac{\overline{A^0_{ii}}}{\overline{|A^0_{ij}|}}\geq r+\rho >r$, i.e., $\overline{A^0_{ii}} \geq (r+\rho) \overline{|A^0_{ij}|}$. Since $a_{max}<\min_{i=1,\cdots,d}{A^0_{ii}}$, then we have
\begin{equation}
    \begin{aligned}
        A_{ii}^K &= A_{ii}^0 - \Delta A^K_{ii} \geq A_{ii}^0 - a_{max}>0,\\
         |A_{ij}^0| - a_{max}&\leq |A_{ij}^K| = |A_{ij}^0 - \Delta A^K_{ij}| \leq |A_{ij}^0| + a_{max}.
    \end{aligned}
\end{equation}
thus, $\overline{A_{ii}^K} \geq \overline{A_{ij}^0}-a_{max}$ and $\max\{\overline{|A_{ij}^0|} - a_{max},0\}\leq\overline{|A_{ij}^K|} \leq \overline{|A_{ij}^0|}+a_{max}$. Since $a_{\max}\leq\eta_0 K B<\frac{\rho}{r+1} \overline{|A_{ij}^0|}$, then $(r+1)a_{max}< \rho \overline{|A_{ij}^0|} < \overline{A_{ii}^0} - r\cdot \overline{|A_{ij}^0|}$, i.e.,
\begin{equation}
    \begin{aligned}
        \frac{\overline{A_{ii}^K}}{\overline{|A_{ij}^K|}} \geq \frac{\overline{A_{ii}^0}-a_{max}}{\overline{|A_{ij}^0|} + a_{max}  } >r.
    \end{aligned}
\end{equation}
Hence, $A^K$ is still a \emph{Markov matrix}.
\end{proof}

\section{FFN Cannot Achieve Adaptive Attention Facing Different Environments}\label{appendix:E}

\begin{table}[H]
\centering
\begin{tabular}{c ccc}
\toprule
\textbf{Dataset} & \textbf{GPT2-DT} & \textbf{GPT2-DT} (part) & \textbf{GPT2-DTMA} \\
\midrule

umaze & 59.0 & 67.0 & \textbf{55.0} \\
medium & 188.0 & 229.0 & \textbf{146.0} \\ 
large & 257.0 & 286.0 & \textbf{203.0} \\

\bottomrule

\end{tabular}
\caption{Episode length ($\downarrow$) in different size of PointMaze (\emph{long-term} environment). \texttt{GPT2-DT} fine-tune all parameters in the model and \texttt{GPT2-DT} (part) only fine-tune the parameters in embedding layer and FFN layer.}
\label{app:ffn}
\end{table}
The FFN layer can weight the output information from different attention heads, but unlike MoA, which assigns a shared importance score to the output of each attention head, the FFN applies a nonlinear transformation to the outputs of the attention heads. To verify the necessity of our proposed combining \texttt{GPT2-DT} with MoA, it can be observed from Table. \ref{app:ffn} that relying solely on the FFN layer is insufficient for effectively balancing the importance of the \emph{Markov head} and \emph{Non-Markov head} in action prediction.

\section{Changes of MoA weight between \emph{Markov heads} and \emph{Non-Markov heads}}\label{appendix:moa_train_trend}

\begin{figure}[H]
    \centering
    \includegraphics[width=0.9\textwidth]{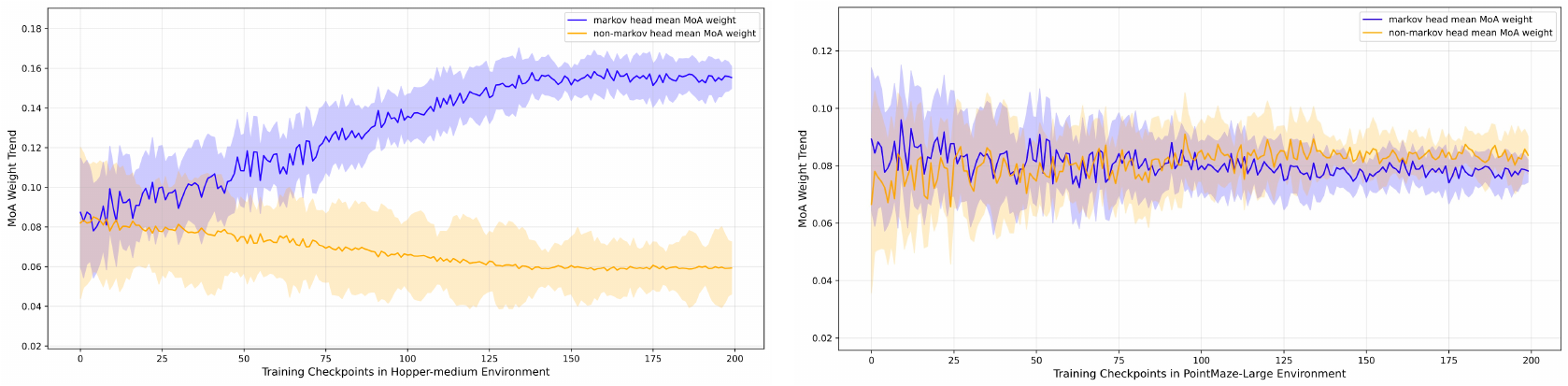}
    \caption{We illustrate the changes of MoA weight between \emph{Markov heads} and \emph{Non-Markov heads} in short-term environment (left) and long-term environment (right).}
    \label{app:trend}
\end{figure}
To better observe the differences in the impact of MoA in short-term and long-term environments, we show the MoA weights assigned for \emph{Markov heads} and \emph{Non-Markov heads} while training in \cref{app:trend}. We found that, after the MoA weights have converged, the \emph{Markov head} is more important than the \emph{Non-Markov head} in predicting actions in short-term environments, while in long-term environments, the \emph{Non-Markov heads} hold greater importance. This indicates that combining the MoA structure enables the model to identify whether the planning ability required by the environment is short-term or long-term.
\end{document}